\let\OldStatex\Statex
\renewcommand{\Statex}[1][0]{%
  \setlength\@tempdima{\algorithmicindent}%
  \OldStatex\hskip\dimexpr#1\@tempdima\relax}
\algnewcommand\algorithmicinput{\textbf{Input:}}
\algnewcommand\Input{\item[\algorithmicinput]}
\algnewcommand\algorithmicoutput{\textbf{Output:}}
\algnewcommand\Output{\item[\algorithmicoutput]}
\newtheorem{thm}{Theorem}
\newtheorem{prop}[thm]{Proposition}
\let\NAT@parse\undefined
\newcommand{\rev}[1]{{#1}}
\title{\Huge Discrete-Continuous Smoothing and Mapping} \author{Kevin J.
  Doherty, Ziqi Lu, Kurran Singh, and John J. Leonard%
  \thanks{This work was
    supported by ONR Neuro-Autonomy MURI grant N00014-19-1-2571 and ONR grant
    N00014-18-1-2832.}%
  \thanks{The authors are with the Computer Science and Artificial Intelligence
    Lab (CSAIL), Massachusetts Institute of Technology (MIT), Cambridge, MA
    02139. {\tt\footnotesize \{kdoherty, ziqilu, singhk,
      jleonard\}@mit.edu}}%
} %Use only for final RAL version
\newcommand{\ubar}[1]{\underaccent{\bar}{#1}}
\newcommand{\R}{\mathbb{R}}
\def \transpose{^\mathsf{T}}
\DeclareMathOperator{\Graph}{graph}
\DeclareMathOperator{\SE}{SE}
\DeclareMathOperator*{\argmax}{argmax}
\DeclareMathOperator*{\argmin}{argmin}
\def \Graph {\mathcal{G}}  %
\def \Nodes {\mathcal{V}}  %
\def \Edges {\mathcal{E}}  %
\newcommand{\directed}[1]{\vec{#1}}  %
\def \dEdges{\directed{\Edges}}  %
\def \edge{\lbrace i,j \rbrace}  %
\def \pose{x}
\newcommand{\true}[1]{\ubar{#1}}
\newcommand{\noisy}[1]{\tilde{#1}}
\newcommand{\est}[1]{\hat{#1}}
\def \tpose{\true{\pose}}
\def \optsym{*}
\def \npose{\noisy{\pose}}
\def \discreteVars{D}
\def \continuousVars{C}
\def \measurements{Z}
\def \measurement{z}
\def \Factors{\mathcal{F}}
\def \factor{f}
\def \node{v}
\def \NLL{\mathcal{L}}
\newcommand{\opt}[1]{#1^\optsym}
\def \DCSAM{DC-SAM}
\def \GTSAM{GTSAM}
\def \Intel{\textit{Intel}}
\def \CSAIL{\textit{CSAIL}}
\begin{document}

\maketitle

\begin{textblock*}{0.9\textwidth}(.05\textwidth,-4.9cm)
  \begin{center}
    This is an extended technical report for our publication in the \emph{IEEE
      Robotics and
      Automation Letters (RA-L)}. \\
    Please cite the paper as: K. Doherty, Z. Lu, K. Singh, and J. Leonard, \\
    ``Discrete-Continuous Smoothing and Mapping'', \emph{IEEE Robotics and
      Automation Letters (RA-L)}, 2022.
  \end{center}
\end{textblock*}

\begin{abstract}
  We describe a general approach \rev{for maximum \emph{a posteriori} (MAP)
    inference in} a class of \emph{discrete-continuous factor graphs} commonly
  encountered in robotics applications. While there are openly available tools
  providing flexible and easy-to-use interfaces for specifying and solving
  \rev{inference} problems formulated in terms of \emph{either} discrete
  \emph{or} continuous graphical models, at present, no similarly general tools
  exist enabling the same functionality for \emph{hybrid} discrete-continuous
  problems. We aim to address this problem. In particular, we provide a library,
  \DCSAM{}, extending existing tools for \rev{inference} problems defined in
  terms of factor graphs to the setting of discrete-continuous models. A key
  contribution of our work is a novel solver for efficiently recovering
  approximate solutions to discrete-continuous \rev{inference} problems. The key
  insight to our approach is that while joint inference over continuous and
  discrete state spaces is often hard, many commonly encountered
  discrete-continuous problems can naturally be split into a ``discrete part''
  and a ``continuous part'' that can individually be solved easily. Leveraging
  this structure, we optimize discrete and continuous variables in an
  alternating fashion. In consequence, our proposed work enables straightforward
  representation of and approximate inference in discrete-continuous graphical
  models. We also provide a method to \rev{approximate} the uncertainty in
  estimates of both discrete and continuous variables. We demonstrate the
  versatility of our approach through its application to distinct robot
  perception applications, including robust pose graph optimization, and
  object-based mapping and localization.
\end{abstract}

\begin{IEEEkeywords}
  SLAM, Localization, Mapping, Hybrid Inference, Factor Graphs
\end{IEEEkeywords}

\section*{Supplemental Material}

\noindent The \DCSAM{} library is currently available at
\url{https://www.github.com/MarineRoboticsGroup/dcsam}.

% An extended technical
% report can be found at \cite{doherty2022discrete-techreport}.

% Several components of
% \DCSAM{} are presently undergoing migration into \GTSAM{} at
% \url{https://www.github.com/borglab/gtsam}.

\section{Introduction}

\IEEEPARstart{P}{robabilistic} graphical models have become the dominant
representational paradigm in robot perception applications, appearing in a wide
range of important estimation problems. This formalism has led to the
development of numerous algorithms and software libraries, such as \GTSAM{}
\cite{dellaert2012factor}, which provide flexible and modular languages for
specifying and solving \rev{inference} problems \rev{in} these models
\rev{(typically in terms of factor graphs)}. Among the models relevant to
robotics applications, \emph{discrete-continuous graphical models} capture a
great breadth of key problems arising in robot perception, task and motion
planning \cite[Sec 3.2]{garrett2021integrated}, and navigation, including data
association, outlier rejection, and semantic simultaneous localization and
mapping (SLAM) \cite{rosen2021advances} (see Figure \ref{fig:graph-examples}).
Despite the importance of these models, while \emph{ad hoc} solutions have been
proposed for \emph{particular} problem instances, at present there is no
off-the-shelf approach for hybrid problems that is either as general or as
easy-to-use as similar methods for their continuous-only or discrete-only
counterparts. This is the problem that we consider in this paper.

\begin{figure}[t]
  \centering
  \begin{subfigure}{1.0\linewidth}
    \centering
    \includegraphics[width=0.8\linewidth]{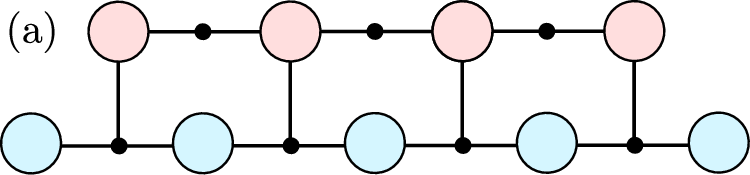}
    % \caption{\label{fig:graph-examples:switching}}
    \phantomsubcaption{\label{fig:graph-examples:switching}}
  \end{subfigure}\par\bigskip
  \begin{subfigure}{1.0\linewidth}
    \centering
    \includegraphics[width=0.8\linewidth]{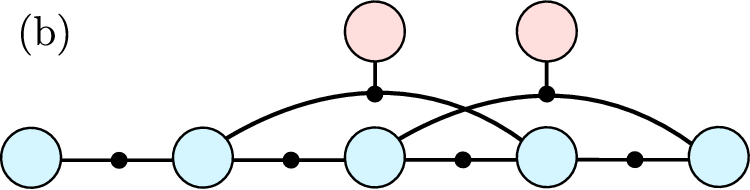}
    % \caption{\label{fig:graph-examples:rpgo}}
    \phantomsubcaption{\label{fig:graph-examples:rpgo}}
  \end{subfigure}\par\bigskip
  \begin{subfigure}{1.0\linewidth}\label{fig:graph-examples:pcr}
    \centering
    \includegraphics[width=0.8\linewidth]{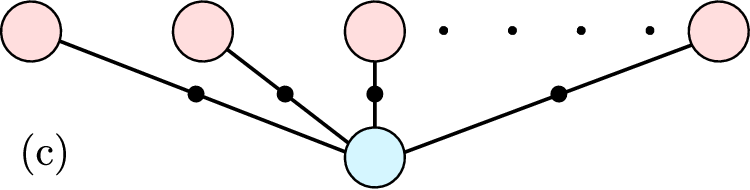}
    % \caption{\label{fig:graph-examples:pcr}}
    \phantomsubcaption{\label{fig:graph-examples:pcr}}
  \end{subfigure}
  \caption{\textbf{Discrete-continuous factor graphs in robotics.} Factor graphs
    modeling several relevant discrete-continuous robot perception problems.
    Discrete variable nodes are colored red, continuous variable nodes are blue,
    and factor nodes are black. (a) Switching systems: discrete states control
    the evolution of a continuous process. (b) Outlier rejection: discrete
    inlier/outlier variables control whether a subset of untrusted measurements
    should be used in estimating continuous variables.
    (c) Point-cloud registration: discrete variables represent correspondences
    and the continuous variable is the relative transformation from a source to
    target point-cloud. \label{fig:graph-examples}}
  % \vspace{-8.mm}
\end{figure}

Our key insight is that in many instances, while \rev{maximum \emph{a
    posteriori} (MAP) inference for graphical models containing both discrete
  and continuous variables is hard (see e.g. \cite[Sec. 14.3.1]{Koller09book})},
if we fix either the discrete or continuous variables, local optimization of the
other set is easy. Continuous optimization can be performed using smooth,
gradient-based methods, while discrete optimization can be performed
\emph{exactly} for a fixed assignment to the continuous variables by means of
standard max-product variable elimination \rev{\cite[Sec.
  13.2.1]{Koller09book}}. \rev{In turn, our approach can be perform efficient
  inference in high-dimensional, nonlinear models commonly encountered in
  robotics.} Moreover, this approach naturally extends many of the additional
desired capabilities of an inference approach in robotics applications, such as
incremental computation \cite{Kaess12ijrr} and uncertainty estimation (cf.
\cite{Kaess09ras}) to the hybrid setting.

Our contributions are as follows: From a robotics science standpoint, we show
that by leveraging the conditional independence structure of hybrid factor
graphs commonly encountered in robotics problems, efficient local optimization
can be performed using alternating optimization, which we prove guarantees
monotonic improvement in the objective. Because our approach naturally respects
the incremental structure of many such problems, it easily scales to thousands
of discrete variables \emph{without} the need to prune discrete assignments.
From a systems standpoint, our discrete-continuous smoothing and mapping
(\DCSAM{}) library extends existing \GTSAM{} tools by adding (1) explicit
constructions for hybrid discrete-continuous factors, (2) a new solver capable
of computing approximate solutions to the corresponding estimation problems, and
(3) an approach for \rev{approximating} uncertainties associated with solutions
to these problems which does not depend on the solver we employ (and therefore
is likely to be of independent interest). To the best of our knowledge, these
are the first openly-available tools for general discrete-continuous factor
graphs encountered in robotics applications. We demonstrate the application of
our methods to point-cloud registration, robust pose graph optimization, and semantic SLAM. In the former
cases, our general approach naturally recovers well-known solutions. In semantic
SLAM, our method produces high-quality trajectory and semantic map estimates
incrementally during navigation.

\section{Related Work}\label{sec:related}

The problem of inference in discrete-continuous graphical models arises in many
domains and intersects a number of communities, even within the field of
robotics. Since our focus in this paper will be on applications in robot
perception, we primarily discuss related works in these settings. \rev{The
  interested reader may refer to \citet{dellaert2021factor} for a discussion of
  these models in broader robotics applications or \citet[Ch. 14]{Koller09book}
  for a discussion of computational hardness, inference techniques, and a
  detailed review of literature on the general problem of inference in hybrid
  models.} Finally, while we discuss our alternating minimization approach in
relation to existing methods, it is important to note: the mere availability of
a consistent framework in which these solutions could be implemented enables
practitioners to compare different approaches without the need to develop the
additional scaffolding usually required to adapt an existing method.

\rev{\textbf{Multi-Hypothesis Methods.}} The class of approaches addressing
hybrid estimation by enumerating and pruning solutions to the discrete states
are referred to as \emph{multi-hypothesis methods}. These methods appeared in
classical detection and tracking problems \cite{reid1979algorithm} and early
SLAM applications \cite{cox1994modeling}. MH-iSAM2 \cite{hsiao2019mh} extends
the capabilities of iSAM2 \cite{Kaess12ijrr} to the case where measurements
between continuous variables may have ambiguity, which can be represented by the
introduction of discrete \rev{variables}. \rev{MH-iSAM2} \rev{maintains} a
hypothesis tree, which can be constructed and updated in an incremental fashion,
like iSAM2, making the solver efficient. The types of ambiguities they consider
can all be represented as factors in a factor graph where the discrete variables
are all conditionally independent. This limits application to scenarios where
individual discrete variables can be \emph{decoupled}. However, correlations
between discrete variables may arise in problem settings as diverse as switching
systems (Figure \ref{fig:graph-examples}; see also \cite{jiang2021imhs}),
outlier rejection,\footnote{Though we do not explore the issue of outlier
  rejection problems with correlations, the interested reader may see
  \citet{lajoie2019modeling} for a formulation in the setting of SLAM.} and as
we explore in Section \ref{sec:semantic-slam}, semantic SLAM. In order to retain
computational efficiency, MH-iSAM2, like all multi-hypothesis methods, must
prune hypotheses, which risks the deletion of hypotheses that would have later
become high-probability modes. iMHS \cite{jiang2021imhs} takes a qualitatively
similar approach to MH-iSAM2, \rev{but focus on the problem of smoothing in
  dynamic hybrid models, exploiting the specific temporal structure of this
  problem setting}. Their approach extends to the setting where correlations
among discrete variables are present. Like MH-iSAM2, however, the efficiency of
iMHS rests on the ability to prune incorrect modes.

\rev{\textbf{Hybrid and Non-Gaussian Inference.}} \rev{Hybrid inference in
  graphical models has been considered previously in many settings (see
  \cite{salmeron2018review} for a review). Prior solution methods focus on
  either \emph{specific} models, such as conditional linear Gaussian models
  (e.g. \cite{ramos2017map}) or attempt to approximate more general models in a
  manner amenable to standard techniques (e.g. by discretizing continuous state
  spaces to form a discrete inference problem). Models encountered in robotics
  applications are typically high-dimensional (often with numbers of states in
  the thousands) and non-Gaussian \cite{rosen2021advances}, and solutions are
  often required quickly. This precludes \emph{direct} application of these
  techniques to the problems we explore in Section \ref{sec:examples}.}

Several approaches have been presented which consider non-Gaussian inference
with application to robot perception; \rev{many of these methods can be viewed
  as adaptations of general hybrid inference techniques tailored toward the
  computational requirements and problem structure in specific robot perception
  problems}. FastSLAM \cite{montemerlo2003simultaneous} is an approach to
filtering in SLAM with non-Gaussian models based on particle filters. In
particular, a set of particles representing the current state of a robot is
retained, and each particle independently samples associations from a
distribution over hypotheses. \rev{ Multimodal iSAM (mm-iSAM)
  \cite{fourie2016nonparametric} and NF-iSAM \cite{huang2021nf} perform
  incremental non-Gaussian inference for continuous-valued variables using
  nonparametric belief propagation \cite{sudderth2010nonparametric} and
  normalizing flows, respectively. In situations where discrete variables can be
  efficiently marginalized to produce a problem exclusively involving continuous
  states, they can approximate the posterior marginals over the remaining
  continuous variables.}

\rev{In contrast, our work focuses on the task of MAP estimation from the
  perspective of local optimization. While we describe a mechanism for
  approximating marginals \emph{given} an (approximate) MAP estimate, the
  uncertainties provided by non-Gaussian inference techniques can be
  substantially richer. However, considering this somewhat more restricted
  problem setting (and coarser marginal approximation) affords us considerable
  benefits in terms of computational expense. Prior works applying optimization
  techniques for MAP estimation in non-Gaussian models (e.g.
  \cite{pfeifer2021advancing, rosen2013robust}) do so by marginalizing out
  discrete variables and using smooth local optimization techniques on the
  resulting \emph{continuous-only} estimation problem. Consequently, they do not
  permit the explicit estimation of discrete states, as we consider here.}

\rev{\textbf{Existing Tools.}} Several existing solvers perform optimization
with models that can be represented in terms of factor graphs. Ceres
\cite{ceres-manual} and g2o \cite{grisetti2011g2o} provide nonlinear
least-squares optimization tools suitable for robotics applications, but they
are not suitable for inference in \rev{hybrid} factor graphs\rev{, e.g. as in
  Figure \ref{fig:graph-examples}}. GTSAM \cite{dellaert2012factor} provides
incremental nonlinear least-squares solvers, like iSAM2 \cite{Kaess12ijrr}, and
tools for representing and solving discrete factor graphs; it is for these
reasons that we choose to extend the capabilities of GTSAM to the setting of
hybrid, discrete-continuous models. Finally, Caesar.jl \cite{Caesarjl2021}
implements mm-iSAM \cite{fourie2016nonparametric}, \rev{supporting} approximate,
incremental non-Gaussian inference over graphical models commonly encountered in
SLAM\rev{, including} discrete-continuous models in scenarios where discrete
variables can be eliminated through marginalization to produce a problem
exclusively involving continuous variables.

\section{Background and Preliminaries}\label{sec:background}

A \emph{factor graph} $\Graph \triangleq \{\Nodes, \Factors, \Edges\}$ with
factor nodes $\factor_k \in \Factors$, variable nodes $\node_i \in \Nodes$, and
edges $\Edges$ is a graphical representation of a product factorization of a
function. In our setting, we are interested in determining the most probable
assignment to a set of discrete variables $\discreteVars$ and continuous
variables $\continuousVars$ given a set of \rev{\emph{realized}} measurements $\measurements$. Under
the assumption that each measurement $\measurement_k$ is independent of all
others given the subset of variables \rev{$\Nodes_k \subseteq \Nodes$} it
relates, we can decompose the posterior $p(\continuousVars, \discreteVars \mid
\measurements)$ into a product of measurement factors $\factor_k$, each of which
depends only on a subset of variables $\Nodes_k$:
\begin{equation}
  \begin{gathered}
    p(\continuousVars, \discreteVars \mid \measurements) \propto
    \prod_k \factor_k(\Nodes_k), \\
    \Nodes_k \triangleq \{\node_i \in \Nodes \mid (\factor_k, \node_i) \in \Edges\}, \label{eq:factor-graph}
  \end{gathered}
\end{equation}
where each factor $f_k$ is in correspondence with either a measurement
likelihood of the form $p(\measurement_k \mid \Nodes_k)$ or a prior
$p(\Nodes_k)$. From \eqref{eq:factor-graph}, the posterior $p(\continuousVars,
\discreteVars \mid \measurements)$ can be decomposed into factors $f_k$ of
three possible types: \emph{discrete} factors $f_k(\discreteVars_k)$ where
$\discreteVars_k \subseteq \discreteVars$, \emph{continuous} factors
$f_k(\continuousVars_k),\ \continuousVars_k \subseteq \continuousVars$, and
\emph{discrete-continuous} factors $f_k(\continuousVars_k, \discreteVars_k)$. In
turn, the maximum \emph{a posteriori} inference problem can be posed as follows:
\begin{equation}\label{eq:map-inference}
  \begin{aligned}
   \continuousVars^*, \discreteVars^* &= \argmax_{\continuousVars, \discreteVars} p(\continuousVars, \discreteVars \mid \measurements) \\
    &= \argmax_{\continuousVars, \discreteVars} \prod_k \factor_k(\Nodes_k) \\
  &= \argmin_{\continuousVars, \discreteVars} \sum_k - \log \factor_k(\Nodes_k).
  \end{aligned}
\end{equation}
That is to say, we can maximize the posterior probability $p(\continuousVars,
\discreteVars \mid \measurements)$ by minimizing the negative log posterior,
which in turn decomposes as a summation. Though the theoretical aspects of the
methods we propose are quite general, in application we will primarily be
concerned with factor graphs in which maximum likelihood estimation (or maximum
\emph{a posteriori} inference) can be represented in terms of a nonlinear
least-squares problem, which permits the application of incremental nonlinear
least-squares solvers like iSAM2 \cite{Kaess12ijrr}.\footnote{This turns out not
  to be particularly restrictive; as demonstrated in \citet[Theorem
  1]{rosen2013robust}, any factor which is positive and bounded admits an
  equivalent representation in terms of a nonlinear least-squares cost function
  for the purposes of optimization.} In particular, we consider
discrete-continuous factors $\factor_k(\continuousVars_k , \discreteVars_k )$
admitting a description as:
\begin{equation}\label{eq:dcfactor}
  \begin{gathered}
  - \log f_k (\continuousVars_k , \discreteVars_k) = \|r_k(\continuousVars_k, \discreteVars_k)\|_2^2, \\ \continuousVars_k \subseteq \continuousVars,\ \discreteVars_k \subseteq \discreteVars,
  \end{gathered}
\end{equation}
where \rev{the function $r_k: \Omega \times \mathbb{D} \rightarrow
  \mathbb{R}^m,\ \Omega \subseteq \mathbb{R}^d,\ \mathbb{D} \subseteq
  \mathbb{N}_{0}^{|D|}$} is first-order differentiable with respect to
$\continuousVars$. We consider factors involving only continuous variables
admitting an analogous representation. We place no restriction on discrete
factors.

\begin{figure*}[t]
  \centering
  \begin{subfigure}{0.5\linewidth}
    \centering
    \includegraphics[width=0.6\columnwidth]{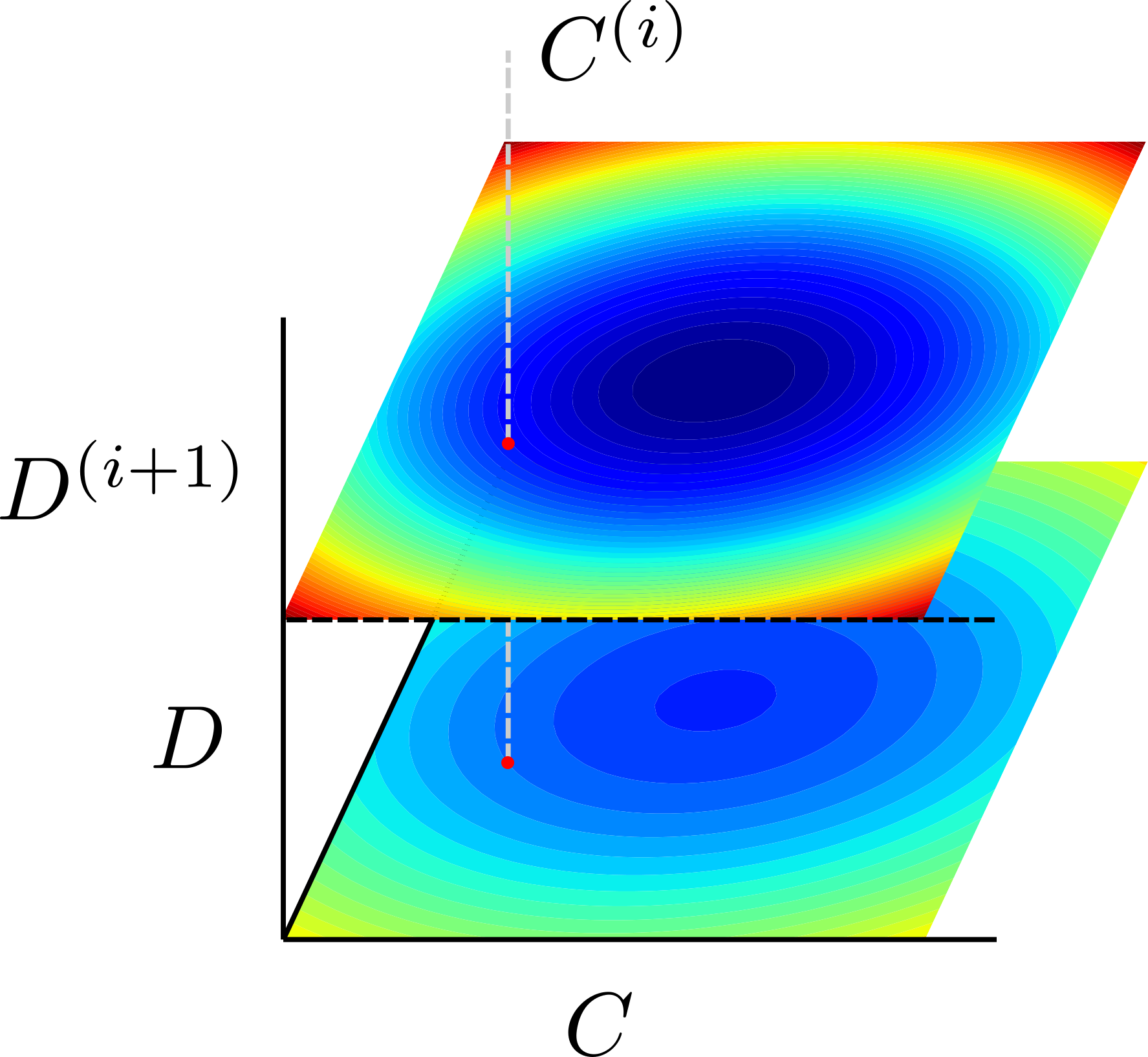}
    \caption{Discrete solve iteration.}\label{subfig:discreteSolve}
  \end{subfigure}%
  \begin{subfigure}{0.5\linewidth}
    \centering
    \includegraphics[width=0.6\columnwidth]{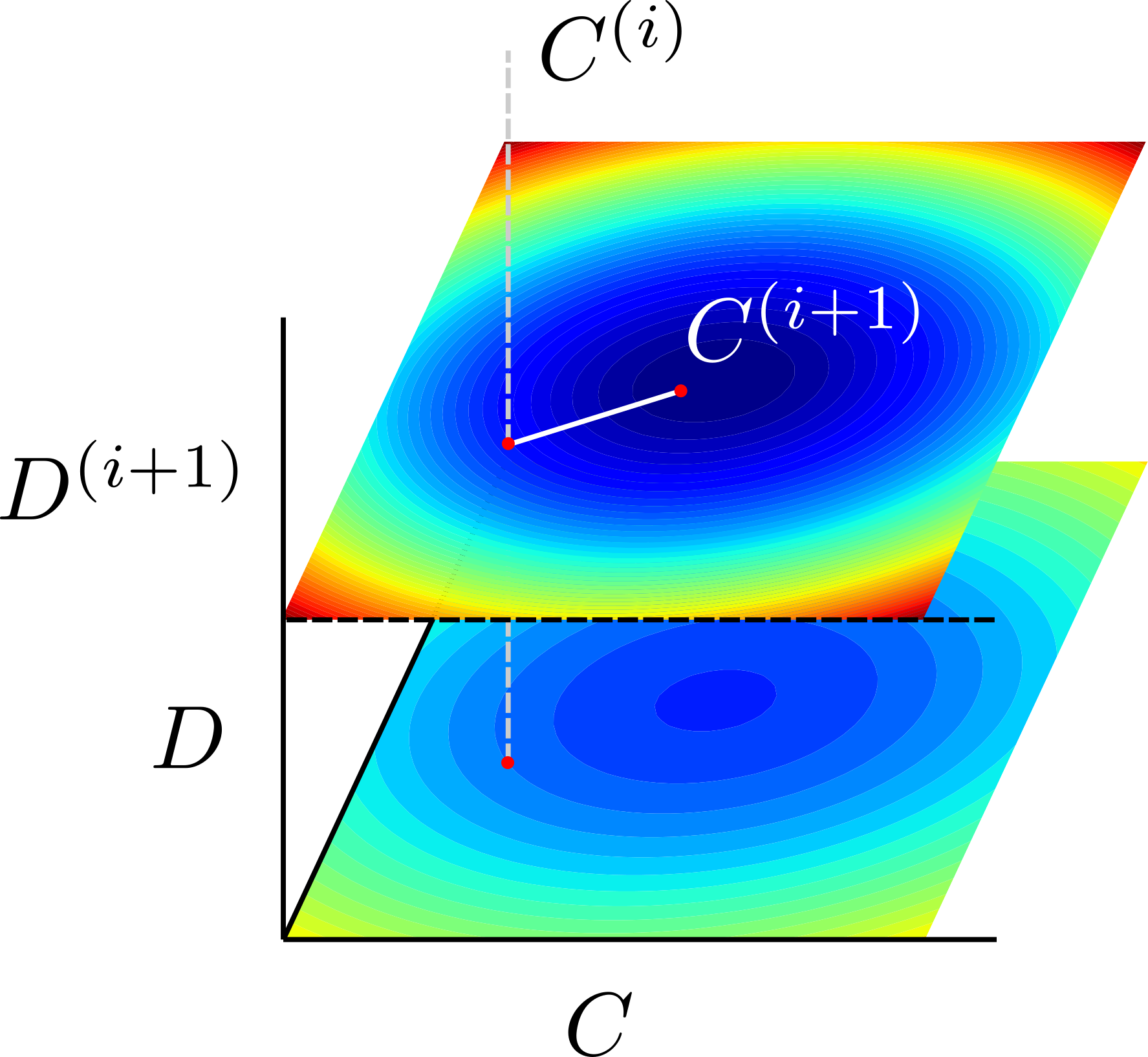}
    \caption{Continuous solve iteration.}\label{subfig:continuousSolve}
  \end{subfigure}
  \caption{\textbf{Overview of a single iteration of optimization.} (a) First,
    given an initial iterate $\continuousVars^{(i)}$ we solve exactly for the
    optimal assignment to the discrete variables using max-product elimination.
    (b) Next, given the latest assignment to the discrete variables, we update
    the continuous variables (e.g. using a trust-region method
    \cite{rosen2012incremental}). Color depicts the objective value of a
    solution, ranging from low cost (blue) to high cost
    (red).}\label{fig:optimization}
\end{figure*}

\section{Overview of the Approach}\label{sec:approach}

The following subsections describe our approach to solving optimization problems
of the form in \eqref{eq:map-inference}. In Section
\ref{sec:alternating-minimization}, we outline our core alternating minimization
procedure and prove that our approach \rev{guarantees} monotonic descent. In
Section \ref{sec:incremental}, we describe how our approach can easily benefit
from existing incremental optimization techniques to efficiently solve
large-scale estimation problems. Finally, in Section \ref{sec:marginals}, we
consider the issue of \rev{estimating} \emph{uncertainties} for the
\rev{solutions} provided by our method.

\subsection{Alternating Minimization}\label{sec:alternating-minimization}

In general, the MAP inference problem in \eqref{eq:map-inference} is
computationally intractable \rev{\cite[Sec. 13.1.1]{Koller09book}}. Indeed, even
the purely continuous estimation problems arising in robot perception are
typically NP-hard, including rotation averaging and pose-graph SLAM
\cite{rosen2021advances}. Despite this, smooth (local) optimization methods
often perform quite well on such problems, both in their computational
efficiency (owing to the fact that gradient computations are typically
inexpensive) and quality of solutions when a good initialization can be
supplied. However, even if we assume the ability to efficiently solve continuous
estimation problems, the introduction of discrete variables complicates matters
considerably: in the worst-case, solving for the joint MAP estimate globally
requires that for each assignment to the discrete states we solve a continuous
optimization subproblem, and discrete state spaces grow \emph{exponentially} in
the number of discrete variables under consideration. Consequently, efficient
approximate solutions are needed.

Our key insight is that we can leverage the conditional independence structure
of the factor graph model to develop an \rev{efficient local optimization method
  which we prove guarantees monotonic improvement in the posterior probability}.
To motivate our approach, we first observe that if we fix any assignment to the
discrete states, the only variables remaining are continuous and approximate
inference can be performed efficiently using smooth optimization techniques
\rev{\cite{Kaess12ijrr}, \cite{rosen2012incremental}}. In this sense, if we
happened to know the assignment to the discrete variables, continuous
optimization becomes ``easy.'' On the other hand, if we fix an estimate for the
continuous variables, we are left with an optimization problem defined over a
discrete factor graph which can be solved to global optimality using max-product
variable elimination \rev{\cite[Sec. 13.2.1]{Koller09book}}, but in the worst
case may still require exploration of \emph{exponentially many} discrete states.
However, it turns out that for many commonly encountered problems, we can often
do much better than the worst case.

\rev{For example}, consider a partition of the discrete states into mutually
exclusive subsets $\discreteVars_j \subseteq \discreteVars$ which are
\emph{conditionally independent} given the continuous states:
\begin{equation}\label{eq:discrete-conditional}
  p(\discreteVars \mid \continuousVars, \measurements) \propto \prod_j p(\discreteVars_j \mid \continuousVars, \measurements).
\end{equation}
It is straightforward to verify from the mutual exclusivity of each set
$\discreteVars_j$ that the problem of optimizing the conditional in
\eqref{eq:discrete-conditional} then breaks up into subproblems involving each
$\discreteVars_j$:
\begin{equation}
  \max_{\discreteVars} p(\discreteVars \mid \continuousVars, \measurements) \propto \prod_j \left[ \max_{\discreteVars_j} p(\discreteVars_j \mid \continuousVars, \measurements) \right].
\end{equation}
Critically, we have exchanged computation of the maximum of the product with the
product of each maximum computed \emph{independently}. In cases where the
discrete states decompose into particularly small subsets ($|\discreteVars_j|
\ll |\discreteVars|$), inference may be carried out efficiently. Many hybrid
optimization problems encountered in robotics admit such advantageous
conditional independence structures. For example, Figures
\ref{fig:graph-examples:rpgo} and \ref{fig:graph-examples:pcr}, depicting robust
pose graph optimization and point-cloud registration, respectively, admit a
decomposition of the form in equation \eqref{eq:discrete-conditional} where each
subset $\discreteVars_j$ contains only \emph{a single discrete variable}.
Moreover, some discrete factor graphs do not decompose quite so drastically
after conditioning on continuous states, but may still permit efficient
inference. For example, Figure \ref{fig:graph-examples:switching} depicts a
switching system in which, after conditioning on the continuous variables, the
resulting discrete graph is a hidden Markov model, for which the most probable
assignment to the discrete states can be computed in polynomial time using the
Viterbi algorithm \cite{viterbi1967error}.

In turn, we will use these ideas to construct an algorithm for efficiently
producing solutions to problems of the form in
\eqref{eq:map-inference}.\footnote{\rev{The approach we present does not
    \emph{require} that a model admit a conditional factorization like the one
    in equation \eqref{eq:discrete-conditional}, though it improves computational
    efficiency considerably (see Section \ref{sec:efficient} for a
    discussion).}} Consider the negative log posterior, defined as:
\begin{equation}
\NLL(\continuousVars, \discreteVars) \triangleq -\log p(\continuousVars, \discreteVars \mid \measurements).
\end{equation}
From \eqref{eq:map-inference}, the joint optimization problem of interest can be formulated as:
\begin{equation}
  \opt{\continuousVars}, \opt{\discreteVars} = \argmin_{\continuousVars, \discreteVars} \NLL(\continuousVars, \discreteVars).
\end{equation}
Our alternating minimization approach (depicted in Figure
\ref{fig:optimization}) proceeds as follows: first, fix an initial
iterate $\continuousVars^{(i)}$. Then, we aim to solve the following
subproblems:
\begin{subequations}
  \begin{equation}
  \discreteVars^{(i+1)} = \argmin_{\discreteVars} \NLL(\continuousVars^{(i)}, \discreteVars) \label{eq:discrete-opt}
  \end{equation}
  \begin{equation}
    \continuousVars^{(i+1)} = \argmin_{\continuousVars} \NLL(\continuousVars, \discreteVars^{(i+1)}). \label{eq:contin-opt}
  \end{equation}
\end{subequations}
We may then repeat \eqref{eq:discrete-opt} and \eqref{eq:contin-opt} until the
relative decrease in $\NLL(\continuousVars, \discreteVars)$ is sufficiently
small or we have reached a maximum desired number of iterations. Finding
minimizers for the subproblems \eqref{eq:discrete-opt} and \eqref{eq:contin-opt}
may still be challenging. Fortunately, one need not find a minimizer for the
subproblems \eqref{eq:discrete-opt} and \eqref{eq:contin-opt} in order for our
approach to \rev{ensure} monotonic improvements to the objective. \rev{In
  particular, we require only that at each iteration the following descent
  criteria hold:}
\begin{subequations}
  \begin{equation}
    \NLL(\continuousVars^{(i)}, \discreteVars^{(i+1)}) \leq \NLL(\continuousVars^{(i)}, \discreteVars^{(i)}) \label{eq:discrete-descent}
  \end{equation}
  \begin{equation}
    \NLL(\continuousVars^{(i+1)}, \discreteVars^{(i+1)}) \leq \NLL(\continuousVars^{(i)}, \discreteVars^{(i+1)}). \label{eq:contin-descent}
  \end{equation}
\end{subequations}
\rev{There are many methods for updating the discrete and continuous states that
  satisfy \eqref{eq:discrete-descent} and \eqref{eq:contin-descent},
  respectively. For the discrete states, the descent criterion in
  \eqref{eq:discrete-descent} can be ensured by using the max-product algorithm
  to compute the \emph{optimal solution} to the subproblem in
  \eqref{eq:discrete-opt}. For the continuous states, the descent criterion in
  \eqref{eq:contin-descent} can be guaranteed by, for instance, using a
  \rev{trust region} method \rev{(e.g. \cite{rosen2012incremental})} to refine
  the continuous states with respect to the objective in \eqref{eq:contin-opt}
  .} In turn, we obtain the following proposition:
\begin{prop}\label{prop:monotonic-cost-reduction}
  Let $\NLL(\continuousVars, \discreteVars)$ be the objective to be minimized,
  with initial iterate $\continuousVars^{(0)}, \discreteVars^{(0)}$. Suppose
  that at each iteration, the discrete update satisfies the descent criterion in
  \eqref{eq:discrete-descent} and likewise for the continuous update in
  \eqref{eq:contin-descent}. Then, the estimates $\continuousVars^{(i)},
  \discreteVars^{(i)}$ obtained by alternating optimization satisfy:
  \begin{equation}
    \NLL(\continuousVars^{(0)}, \discreteVars^{(0)}) \geq \NLL(\continuousVars^{(1)}, \discreteVars^{(1)}) \geq \ldots \geq \NLL(\continuousVars^{(T)}, \discreteVars^{(T)}),
  \end{equation}
  i.e., this procedure monotonically improves the objective.
\end{prop}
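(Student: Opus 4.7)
The plan is to prove this by a straightforward induction on the iteration index $i$, chaining together the two descent invariants assumed in the statement. The proposition is essentially a tautology once the invariants \eqref{eq:discrete-descent} and \eqref{eq:contin-descent} are in hand, so there is no real analytic obstacle; the work lies entirely in carefully stringing the inequalities together.

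First I would establish the base case: at iteration $i=0$, applying the discrete descent invariant \eqref{eq:discrete-descent} gives $\NLL(\continuousVars^{(0)}, \discreteVars^{(1)}) \leq \NLL(\continuousVars^{(0)}, \discreteVars^{(0)})$, and then applying the continuous descent invariant \eqref{eq:contin-descent} gives $\NLL(\continuousVars^{(1)}, \discreteVars^{(1)}) \leq \NLL(\continuousVars^{(0)}, \discreteVars^{(1)})$. Transitivity of $\leq$ on $\R$ yields $\NLL(\continuousVars^{(1)}, \discreteVars^{(1)}) \leq \NLL(\continuousVars^{(0)}, \discreteVars^{(0)})$.

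Next I would carry out the inductive step in exactly the same way: assuming $\NLL(\continuousVars^{(i)}, \discreteVars^{(i)}) \leq \NLL(\continuousVars^{(i-1)}, \discreteVars^{(i-1)})$, the pair of invariants at iteration $i$ gives
\begin{equation}
\NLL(\continuousVars^{(i+1)}, \discreteVars^{(i+1)}) \leq \NLL(\continuousVars^{(i)}, \discreteVars^{(i+1)}) \leq \NLL(\continuousVars^{(i)}, \discreteVars^{(i)}),
\end{equation}
so by the inductive hypothesis the full chain of non-increasing values extends one more step. Iterating from $i=0$ up to $i=T-1$ produces the desired chain
\begin{equation}
\NLL(\continuousVars^{(0)}, \discreteVars^{(0)}) \geq \NLL(\continuousVars^{(1)}, \discreteVars^{(1)}) \geq \cdots \geq \NLL(\continuousVars^{(T)}, \discreteVars^{(T)}).
\end{equation}

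The only thing worth commenting on, rather than an obstacle per se, is that the proposition deliberately assumes the descent invariants as hypotheses rather than requiring exact subproblem minimization; I would briefly note that the two canonical instantiations in the paper (max-product variable elimination for \eqref{eq:discrete-opt}, and a line-search or trust-region step for \eqref{eq:contin-opt}) both satisfy these invariants, so that the abstract guarantee translates into a concrete one for the algorithm actually implemented in \DCSAM{}.
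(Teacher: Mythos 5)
Your proposal is correct and follows essentially the same argument as the paper: at each iteration you chain the discrete descent invariant \eqref{eq:discrete-descent} with the continuous descent invariant \eqref{eq:contin-descent} and conclude by transitivity, the only cosmetic difference being that you phrase the iteration over $i$ as an explicit induction while the paper simply notes the chained inequality holds for all $i$.
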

\begin{proof}
  Fix an initial iterate $(\continuousVars^{(i)}, \discreteVars^{(i)})$. By
  hypothesis, after a discrete update, we have $\NLL(\continuousVars^{(i)},
  \discreteVars^{(i+1)}) \leq \NLL(\continuousVars^{(i)}, \discreteVars^{(i)})$
  (from \eqref{eq:discrete-descent}). Consequently, the updated assignment
  comprised of the pair $(\continuousVars^{(i)}, \discreteVars^{(i+1)})$ is
  \emph{at least as good} as the previous assignment. By the same reasoning,
  performing a subsequent continuous update gives a pair
  $(\continuousVars^{(i+1)}, \discreteVars^{(i+1)})$ satisfying
  $\NLL(\continuousVars^{(i+1)}, \discreteVars^{(i+1)}) \leq
  \NLL(\continuousVars^{(i)}, \discreteVars^{(i+1)})$ (from
  \eqref{eq:contin-descent}). Combining these inequalities, we have:
  \begin{equation}
    \NLL(\continuousVars^{(i+1)}, \discreteVars^{(i+1)}) \leq \NLL(\continuousVars^{(i)}, \discreteVars^{(i+1)}) \leq \NLL(\continuousVars^{(i)}, \discreteVars^{(i)}).
  \end{equation}
  The above chain of inequalities holds for all $i$, completing the proof.
\end{proof}

\subsection{Online, Incremental Inference}\label{sec:incremental}

Many robotics problems naturally admit \emph{incremental} solutions wherein new
information impacts only a small subset of the states we would like to estimate.
Because our alternating minimization approach relies only upon the ability to
provide an improvement in each of the \emph{separate} discrete and continuous
subproblem steps, we can rely on existing techniques to solve these problems in
an incremental fashion. In particular, in the continuous optimization
subproblem, we use iSAM2 \cite{Kaess12ijrr} to refactor the graph containing
continuous variables into a \emph{Bayes tree}, permitting incremental inference
of the continuous variables. Similarly, owing to the discrete factorization in
\eqref{eq:discrete-conditional}, if, for example, we introduce new discrete
variables which are conditionally independent of all previous discrete states
given the current continuous state estimate, we are able to solve for the most
probable assignment to these variables \emph{without} the need to recompute
solutions for previously estimated variables. In turn, we are able to
efficiently solve online inference problems, as we demonstrate in Section
\ref{sec:semantic-slam}, in which we produce solutions to online SLAM problems.

\subsection{Recovering Marginals}\label{sec:marginals}

Uncertainty representation is important in many applications of robot
perception. \DCSAM{} supports \rev{\emph{post hoc}} recovery of approximate
marginal distributions for discrete and continuous variables \rev{from an
  estimate}. For continuous variables, we use the \emph{Laplace approximate}
\cite[Sec. 4.4]{bishop_prml} adopted by several nonlinear least-squares solvers
(Ceres, g2o, and \GTSAM{}). In particular, we fix a linearization point for the
continuous variables (and a current estimate for discrete variables) and compute
an approximate linear Gaussian distribution centered at this linearization
point. For discrete variables, we fix an assignment to the continuous variables
and compute the exact discrete marginals conditioned on this linearization point
using \rev{clique tree propagation \cite[Ch.
  10]{Koller09book}}. The marginals
we recover, then are:
\begin{subequations}
  \begin{equation}\label{eq:discrete-marg}
    p(\discreteVars_j \mid \est{\continuousVars}, \measurements) = \sum_{\discreteVars \setminus \discreteVars_j} p(\discreteVars \mid \est{\continuousVars}, \measurements),\quad \discreteVars_j \subseteq \discreteVars,
  \end{equation}
  \begin{equation}\label{eq:continuous-marg}
    p(\continuousVars_j \mid \est{\discreteVars}, \measurements) = \int_{\continuousVars \setminus \continuousVars_j} p(\continuousVars \mid \est{\discreteVars}, \measurements),\quad \continuousVars_j \subseteq \continuousVars.
  \end{equation}
\end{subequations}
The reason for this approach is that in general, the number of posterior modes
captured by a particular (discrete-continuous) factor graph can grow
combinatorially. \rev{Computing} exact marginals can easily become intractable.
In contrast, by making use of the conditional factorization in
\eqref{eq:discrete-conditional}, solving for the discrete marginals in
\eqref{eq:discrete-marg} is often tractable.\footnote{It is also interesting to
  note that the discrete marginals we recover are \emph{exactly} the ``weights''
  computed in the expectation step of the well-known
  \emph{expectation-maximization} (EM) algorithm \cite{dempster1977maximum}.}
Notably, our approach to marginal recovery does not require that one use the
alternating minimization strategy outlined in Section
\ref{sec:alternating-minimization}; any method of providing an estimate
$(\est{C}, \est{D})$ will suffice.

The continuous marginals in \eqref{eq:continuous-marg} are estimated using the
Laplace approximation \cite{Kaess09ras}. In our derivation, it will be
convenient to consider the continuous states as a vector $\continuousVars \in
\R^d$. Assume the point $(\est{\continuousVars}, \est{\discreteVars})$ is a
critical point of the continuous subproblem \eqref{eq:contin-opt}, i.e. $\nabla
\NLL(\continuousVars, \est{\discreteVars})\vert_{\est{\continuousVars}} = 0$.
Consider a Taylor expansion of the objective $\NLL(\continuousVars,
\est{\discreteVars})$ about the point $\est{\continuousVars}$:
\begin{equation}\label{eq:nll-taylor}
  \NLL(\continuousVars, \est{\discreteVars}) \approx \NLL(\est{\continuousVars}, \est{\discreteVars}) - \frac{1}{2}A\left( \continuousVars - \est{\continuousVars} \right),
\end{equation}
with the $d \times d$ Hessian matrix $A$ defined as:
\begin{equation}
  A \triangleq - \nabla^2 \NLL(\continuousVars, \est{\discreteVars})\vert_{\est{\continuousVars}}.
\end{equation}
Exponentiating both sides of \eqref{eq:nll-taylor} and appropriately normalizing
the result gives the linear Gaussian approximation:
\begin{equation}\label{eq:laplace-marg}
  p(\continuousVars \mid \est{\discreteVars}, \measurements) \approx \frac{|A|^{1/2}}{(2\pi)^{d/2}}\exp\left\{ -\frac{1}{2} \|C - \est{C}\|^2_{A^{-1}} \right\},
\end{equation}
where $\|c\|_{A^{-1}}$ denotes the Mahalanobis norm $\sqrt{c\transpose A c}$.
When all factors involving continuous variables take the form in
\eqref{eq:dcfactor}, the locally linear approximation of $\NLL$ about $\est{C}$
admits a Hessian $A$ which can be expressed in terms of the Jacobian of the
measurement function $r$, and we have $A \succeq 0$ \cite{dellaert2017factor}.
Additionally, the relevant components of the matrix $A$ for estimating the
marginals for a subset of variables $\continuousVars_j$ can be recovered from
its square root, i.e. the \emph{square-root information matrix} (cf.
\cite{Kaess09ras}).

\section{Example Applications}\label{sec:examples}

\begin{figure*}
  \centering
  \begin{subfigure}{1.0\linewidth}
    \includegraphics[width=1.0\columnwidth]{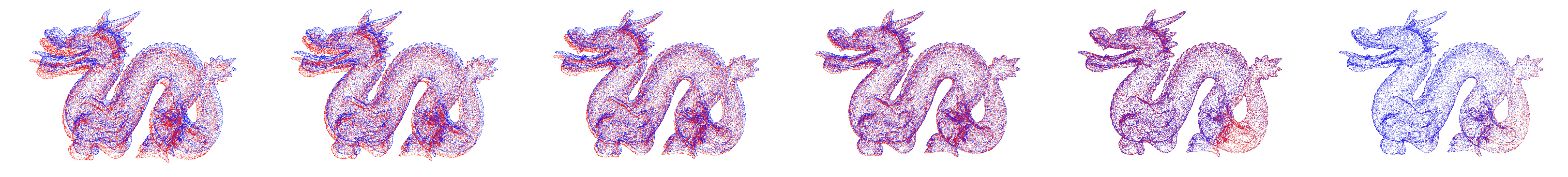}
    \caption{Point-cloud registration\label{fig:example-applications:pcr}}
    \vspace{5mm}
  \end{subfigure}
  \begin{subfigure}{1.0\linewidth}
    \includegraphics[width=1.0\columnwidth]{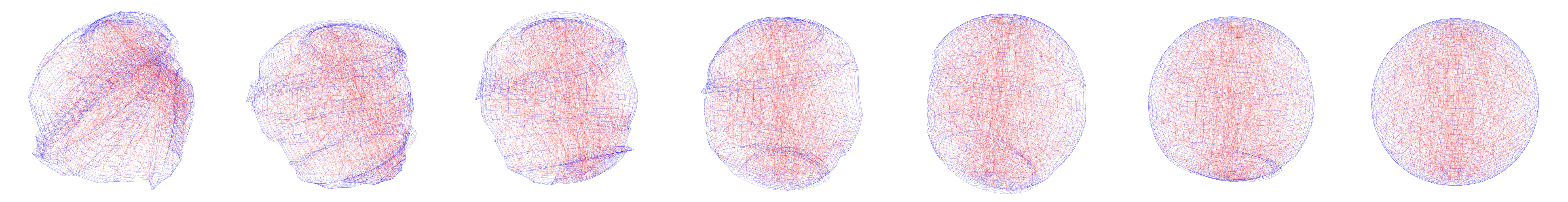}
    \caption{Robust pose graph optimization\label{fig:example-applications:rpgo}}
  \end{subfigure}
  \caption{\textbf{Example applications.} (a) Point-cloud registration using the
    \emph{Stanford Dragon} dataset \cite{curless1996volumetric}. (b) Robust pose
    graph optimization using the \emph{Sphere} dataset \cite{Kaess12ijrr}. Each
    row displays the sequence of iterates for our method. In each case, we
    obtain high-quality solutions in just a few
    iterations. \label{fig:example-applications}}
\end{figure*}

In the following sections we provide example applications motivated by typical
robot perception problems. In Section \ref{sec:icp}, we begin with an
instructive example formulating the classical problem of \emph{point-cloud
  registration} in terms of a discrete-continuous factor graph, which can be
optimized using our solver. In Section \ref{sec:rpgo}, we consider the problem
of \emph{robust pose graph optimization}, \rev{where} we aim to estimate a set
of poses given only noisy measurements between a subset of them, and some
fraction of those measurements may be outliers. We implement a straightforward
approach to solving this problem using \DCSAM{} and show that it produces
competitive results. In Section \ref{sec:semantic-slam}, we demonstrate the
application of our solver to a tightly-coupled semantic SLAM problem, where the
variables of interest are robot poses, landmark locations, and semantic classes
of each landmark.

\subsection{Point-cloud registration}\label{sec:icp}

As a simple first example, we will consider the point-cloud registration
problem. Consider a source point-cloud $\mathcal{P}_S = \{p^S_i \in \R^d,\ i =
1, \ldots, n\}$ and target point-cloud $\mathcal{P}_T = \{p^T_j \in \R^d,\ j =
1, \ldots, m\}$. Associate with each point in the source cloud $p^S_i$ a
discrete variable $d_i \in \{1, \ldots, m\}$ determining the corresponding point
in the target cloud. The goal of point-cloud registration is to identify the
rigid-body transformation $T \in \SE(3)$ that minimizes the following objective:
\begin{equation}
  \min_{T \in \SE(3)} \sum_{i=1}^n \|Tp^S_i - p^T_{d_i}\|_2^2. \label{eq:pcr}
\end{equation}
The key challenge encountered in this setting is that the correspondence
variables $d_i$ are unknown and unobserved. We might consider, then, introducing
the correspondence variables into the optimization, to determine the \emph{best}
set of correspondence variables \emph{and} the corresponding rigid-body
transformation of the point-cloud, obtaining the following problem:
\begin{equation}
  \min_{d_i \in 1:m, T \in \SE(3)} \sum_{i=1}^n \|Tp^S_i - p^T_{d_i}\|_2^2. \label{eq:pcr-hybrid}
\end{equation}
Unfortunately, this problem is nonconvex and solving it to global optimality is,
in general, NP-hard, requiring search over $\mathcal{O}(n^m)$ discrete state assignments.

A popular algorithm for solving the problem in equation \eqref{eq:pcr-hybrid} is
to first posit an initial guess for the transformation $T$, determine the
transformed locations of each of the points in the source cloud, then associate
each point in the source cloud with the \emph{nearest} point in the target cloud
after the transformation. This is the \emph{iterative closest point} (ICP)
algorithm \cite{besl1992method, chen1992object}. Defining $r_i(T, d_i) = Tp^S_i
- p^T_{d_i}$, we can see that the problem in equation \eqref{eq:pcr-hybrid} is
concisely described in terms of factors of the form \eqref{eq:dcfactor}.
Moreover, the conditional independence structure of the graph corresponding to
this problem (depicted in Figure \ref{fig:graph-examples}) immediately motivates
our alternating optimization approach, since each $d_i$ in fact \emph{decouples}
when conditioned on $T$. Finally, one can verify that our alternating
optimization procedure turns out to be identical to ICP (as described above) in
this setting. To demonstrate this fact, we applied our method to point cloud
registration using the \emph{Stanford Dragon} dataset
\cite{curless1996volumetric}, the results of which are depicted in Figure
\ref{fig:example-applications:pcr}. Indeed, we observe that our approach
produces qualitatively reasonable results in just a few iterations. Moreover,
while implementing ICP typically requires that we explicitly write the
(independent) correspondence updates and transform update, we need not encode
this explicitly at all: the fact that the discrete (correspondence) update
separates into independent subproblems is simply a consequence of the
conditional independence structure of the factor graph model in Figure
\ref{fig:graph-examples:pcr}. That said, our approach does not have knowledge
about the particular \emph{spatial} structure of the problem and therefore
performs na\"ive search over discrete assignments. In contrast, a typical
implementation of ICP would make use of efficient spatial data structures to
speed up the solution to the discrete subproblem, see
\cite{rusinkiewicz2001efficient} (indeed, such optimizations for
\emph{particular} problems like this would make for interesting future
applications of the \DCSAM{} library). However, unlike any \emph{particular} ICP
implementation, our solver can be readily extended (without modification) to
more complex cost functions or models because the structure of the subproblems
is dictated by the independence structure inherent in the graphical model.

\subsection{Robust Pose Graph Optimization}\label{sec:rpgo}

\begin{figure*}
  \centering
  \begin{subfigure}{1.0\linewidth}
    \centering
    \includegraphics[width=0.28\columnwidth]{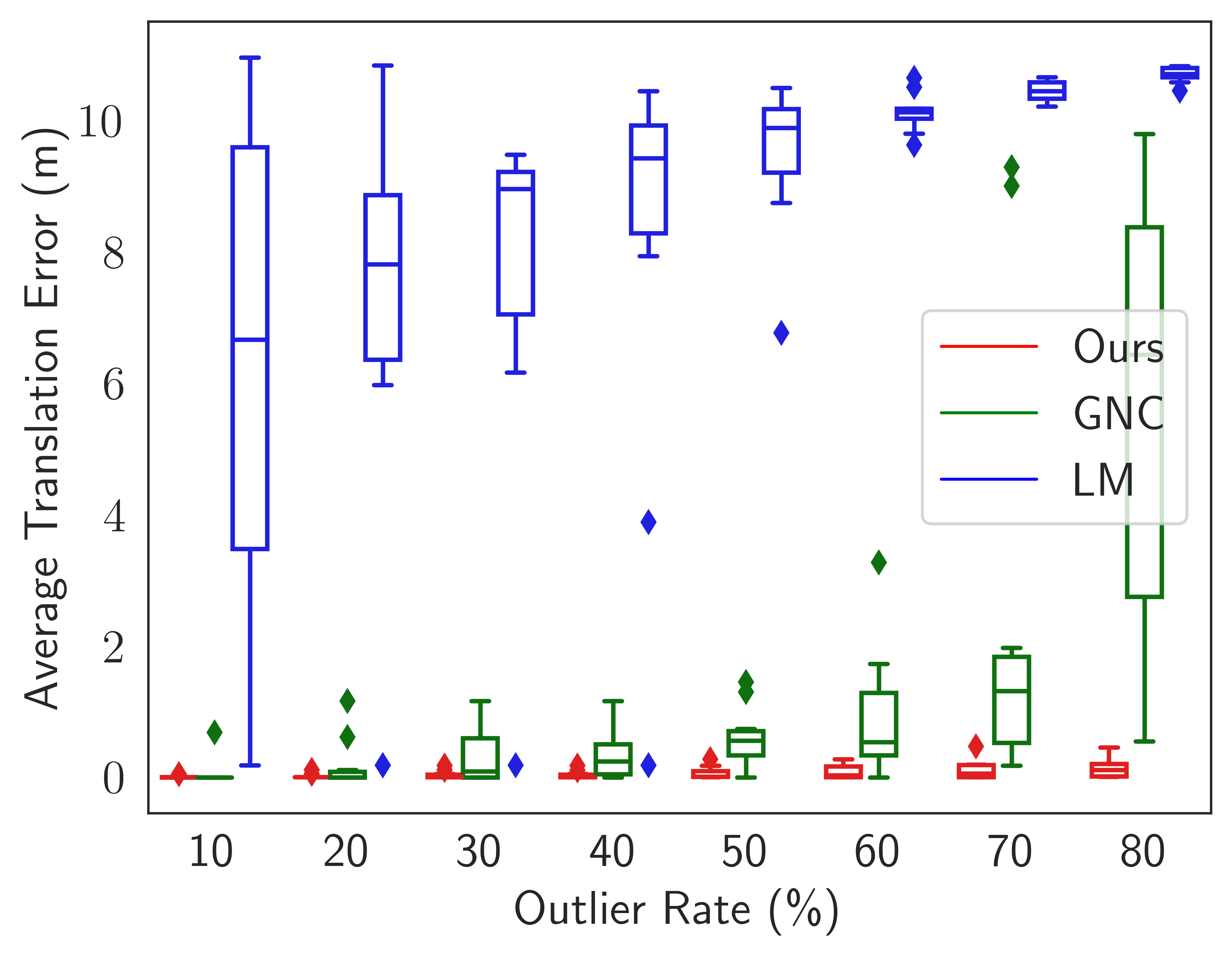}
    \includegraphics[width=0.28\columnwidth]{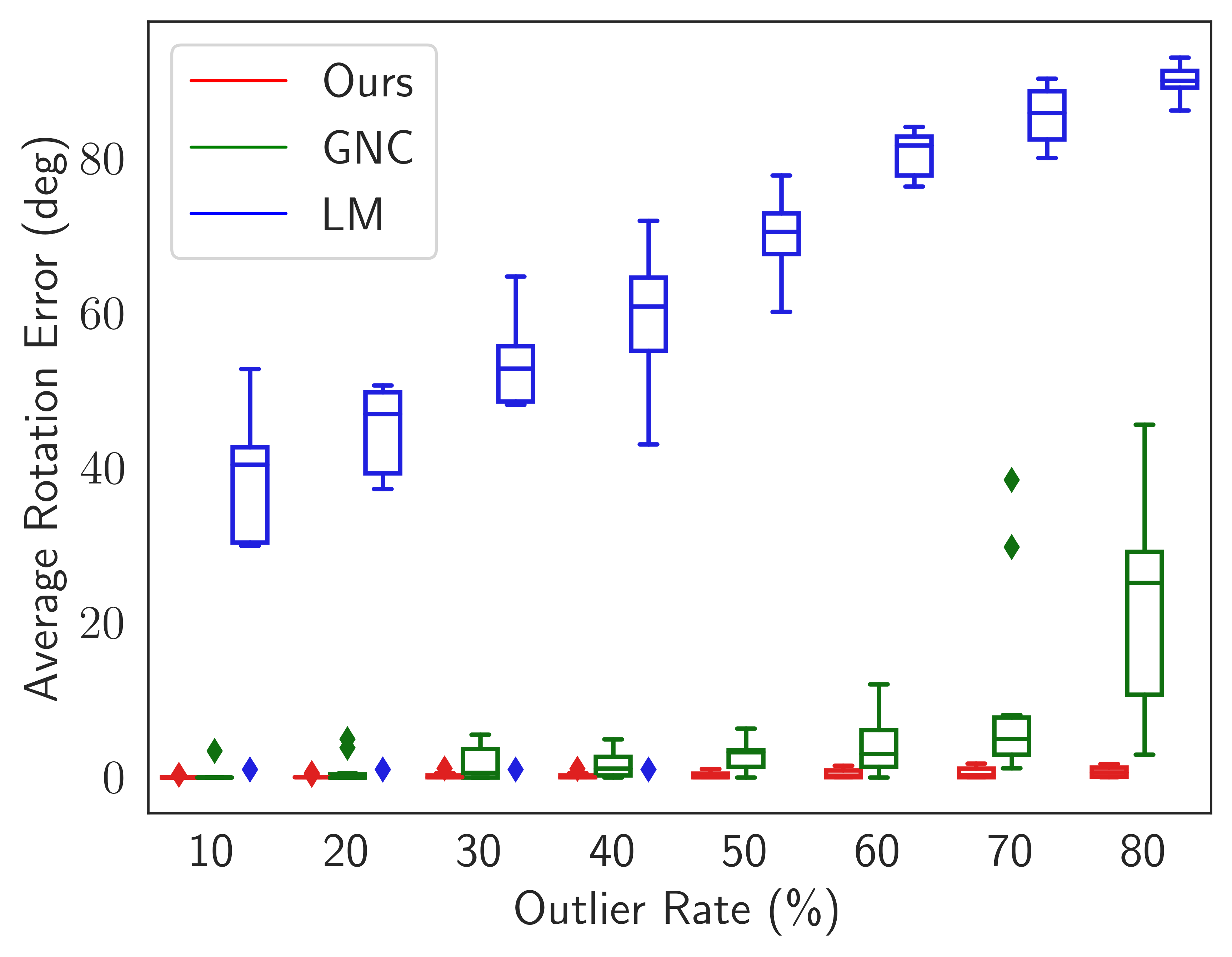}
    \includegraphics[width=0.28\columnwidth]{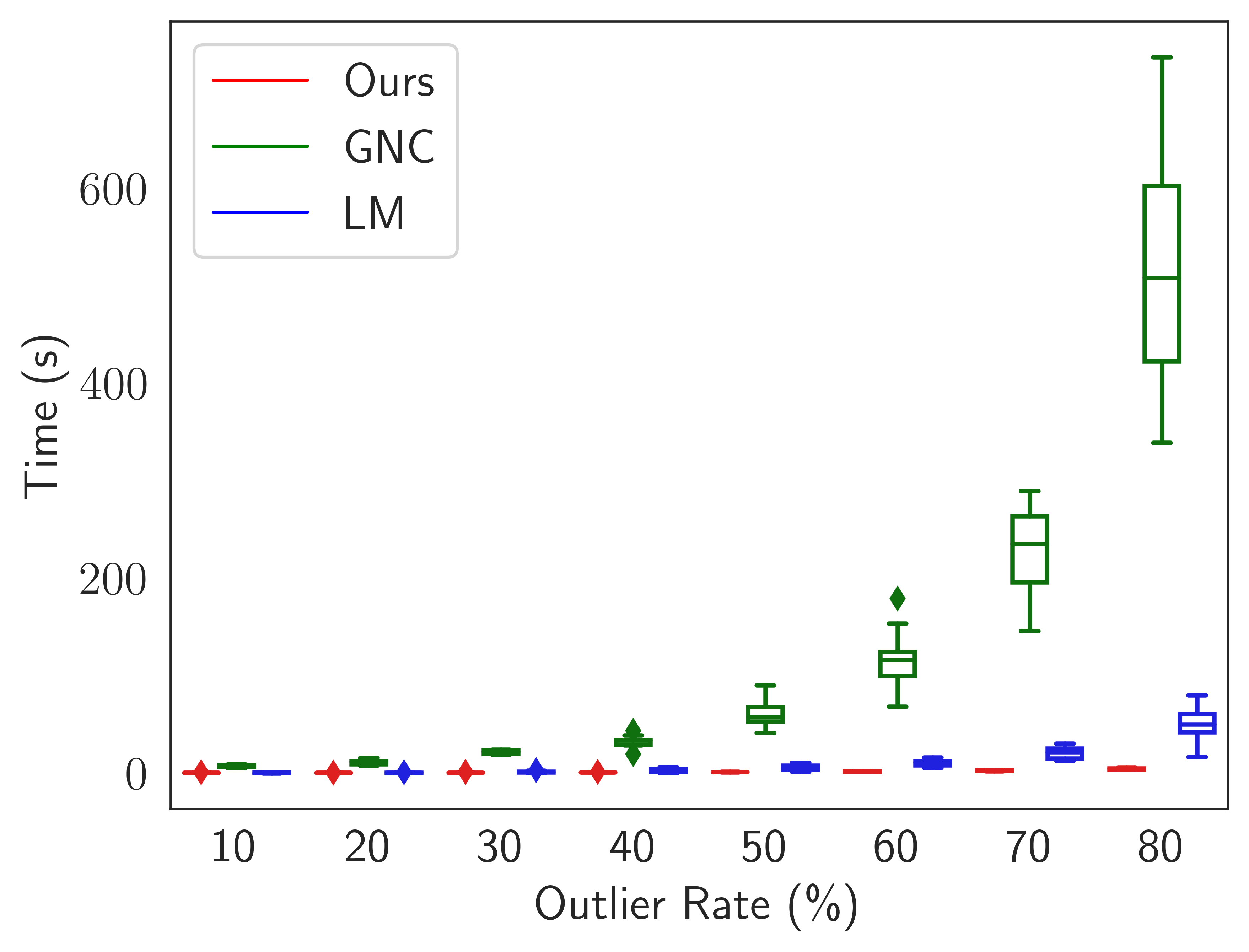}
    \caption{\Intel{}\label{pgo:intel}}
  \end{subfigure}
  \begin{subfigure}{1.0\linewidth}
    \centering
    \includegraphics[width=0.28\columnwidth]{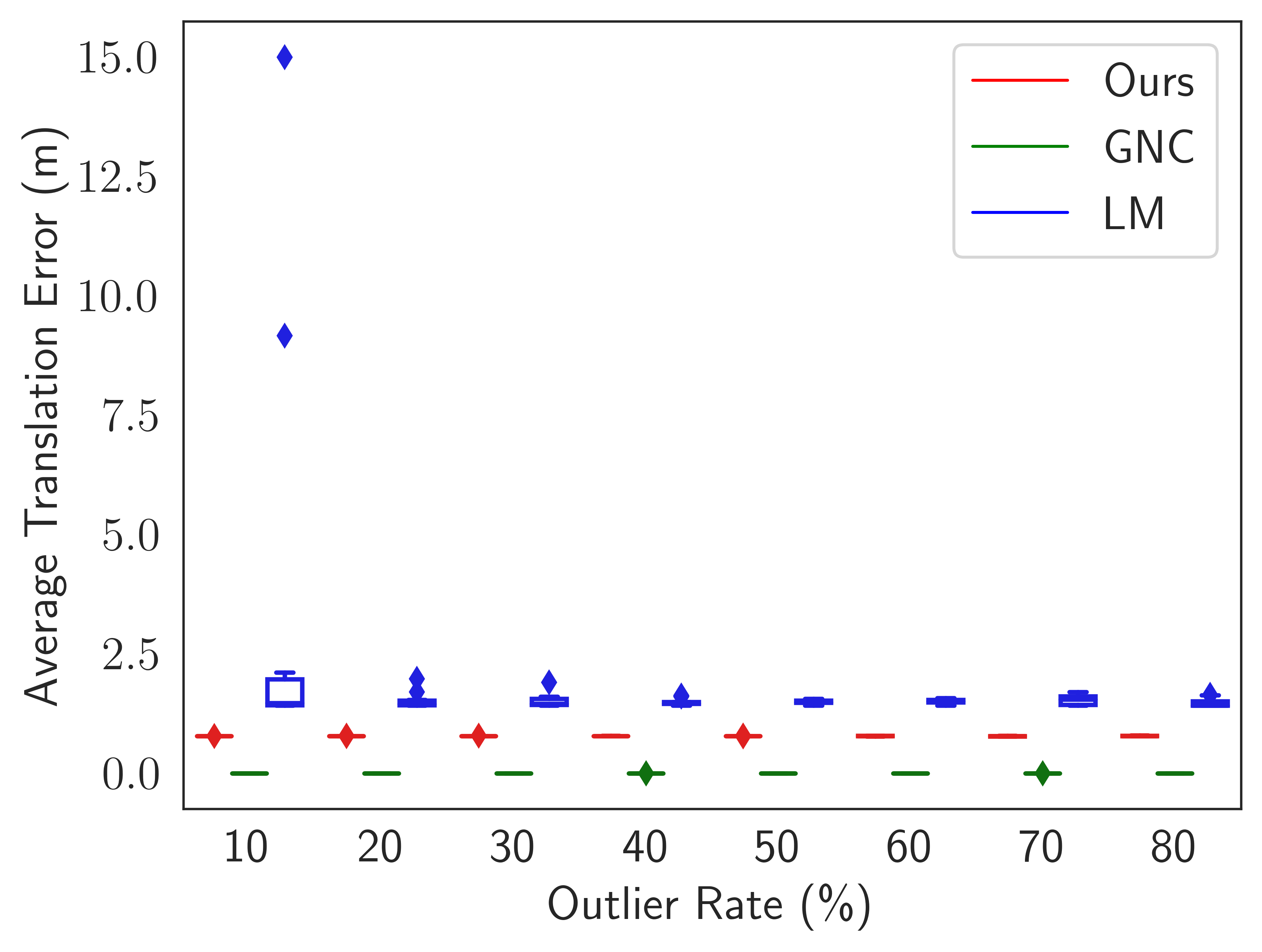}
    \includegraphics[width=0.28\columnwidth]{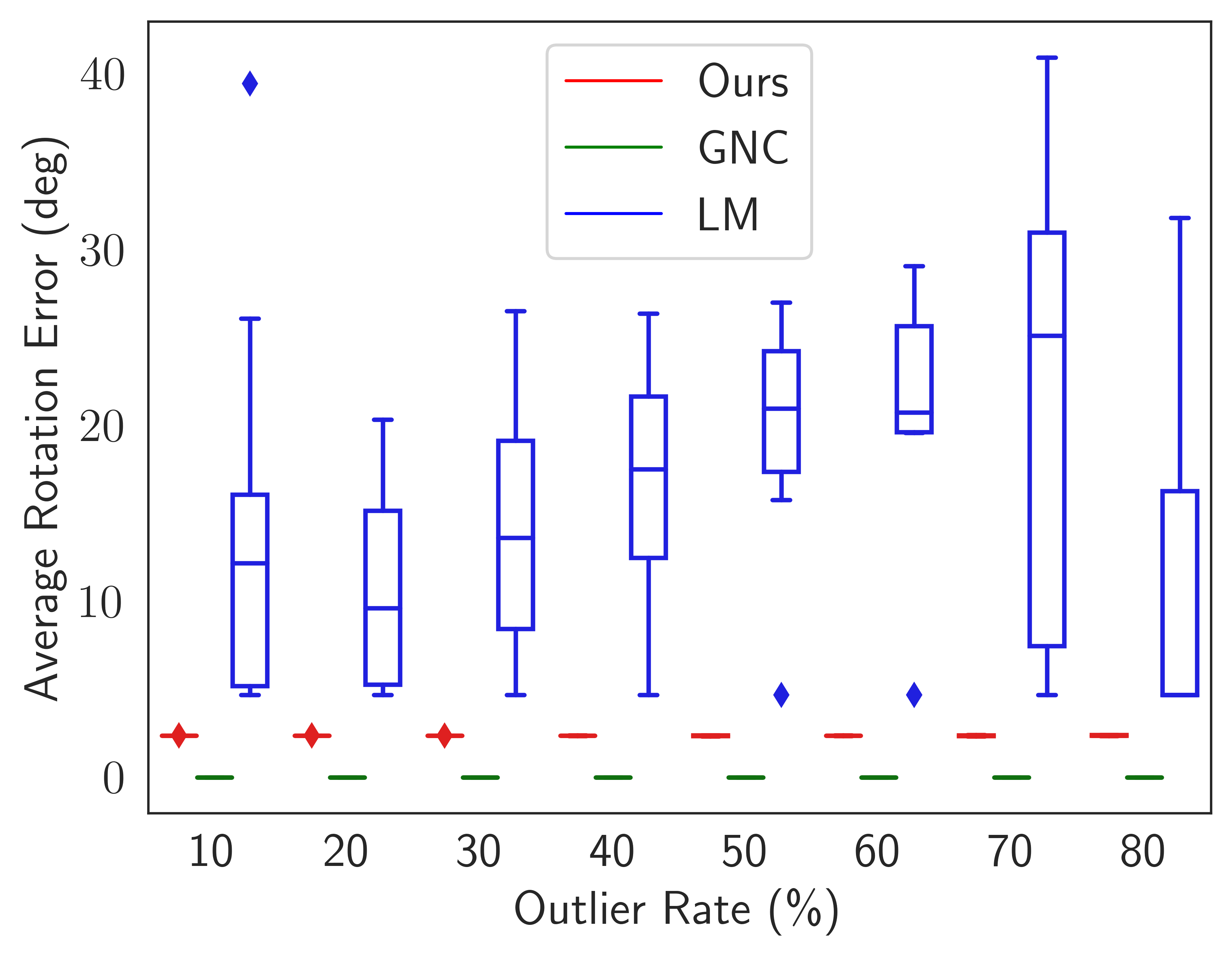}
    \includegraphics[width=0.28\columnwidth]{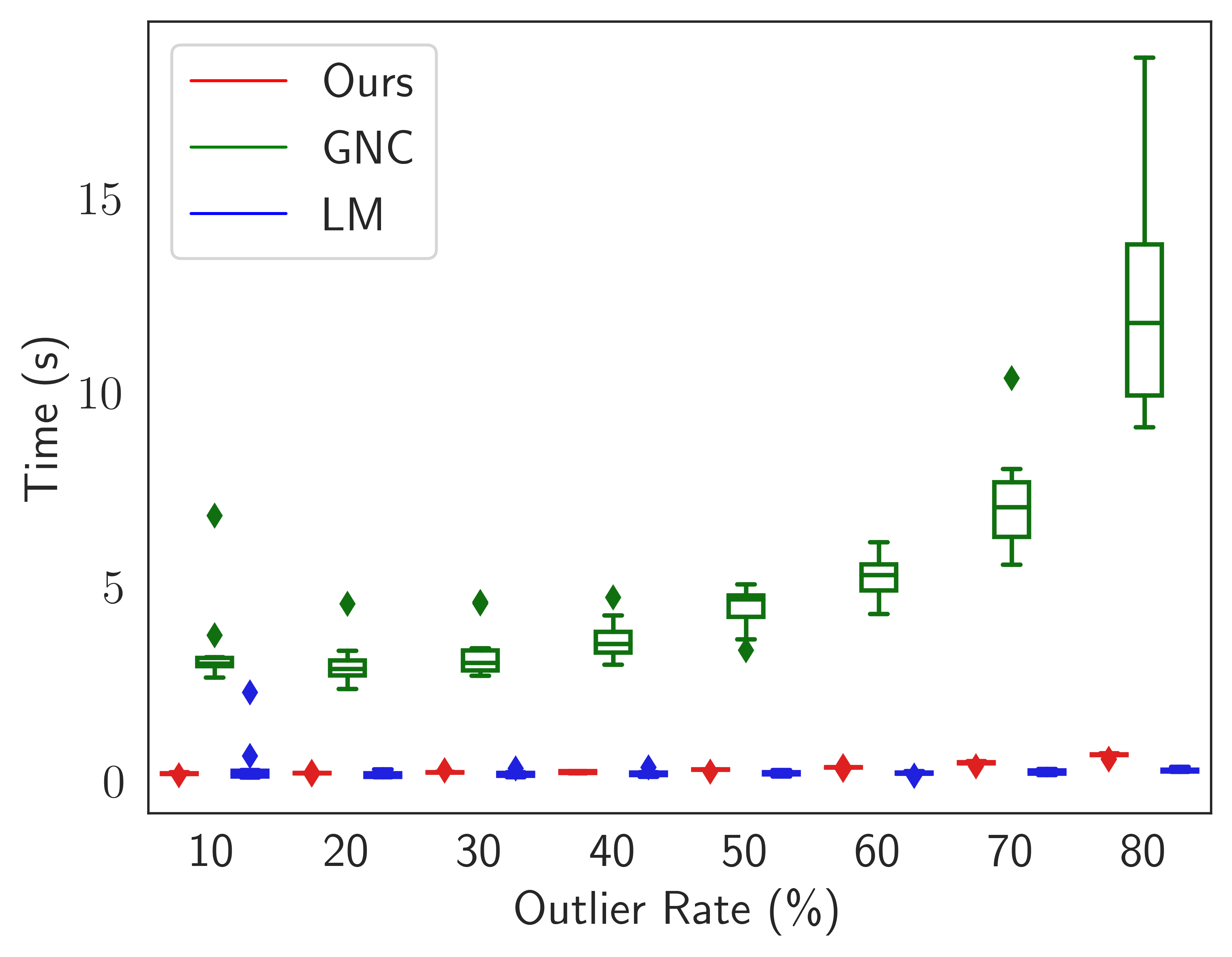}
    \caption{\CSAIL{}\label{pgo:csail}}
  \end{subfigure}
  \caption{\textbf{Robust pose graph optimization.} Average trajectory errors on
    (a) the \Intel{} dataset, (b) the \CSAIL{} dataset. Left to right:
    translation error, rotation error, and computation time. Statistics computed
    over 10 Monte Carlo trials. LM refers to the result obtained by running
    Levenberg-Marquardt on the corrupted graph.\label{fig:rpgo}}
\end{figure*}

In this section we consider \emph{robust} pose graph optimization. In pose graph
optimization we are interested in estimating a set of poses $x_1, \ldots, x_n
\in \SE(3)$ from noisy measurements $\npose_{ij}$ of a subset of their (true)
relative transforms $\tpose_{ij} = \tpose_i^{-1}\tpose_j$. This problem
possesses a natural graphical structure $\Graph = \{\Nodes, \dEdges\}$ where
nodes correspond to the poses $\pose_i$ to be estimated and edges correspond to
the available noisy measurements between them. Pose graph optimization then aims
to solve the following problem:
\begin{equation}\label{eq:pgo}
  \min_{\pose_i \in \SE(3)} \sum_{\edge \in \dEdges} \big\|\underbrace{\rev{\log\left( \npose_{ij}^{-1} \pose_i^{-1}\pose_j \right)^{\vee}}}_{r_{ij}(\pose_i, \pose_j)}\big\|^2_{\Sigma},
\end{equation}
where \rev{$\log(\cdot)^{\vee}: \SE(3) \rightarrow \R^6$ takes an element of
  $\SE(3)$ to an element of the tangent space (cf. \cite[Sec.
  8.3.2]{barfoot2017state}), and $\Sigma \in \R^{6 \times 6}$ is a covariance
  matrix}.

Suppose however, that some fraction of our measurements are corrupted by an
unknown outlier process. We would like to determine the subset of outlier
measurements and inlier measurements, as well as the corresponding optimal
poses. It is typical to assume that the edges $\dEdges$ partition into a set of
trusted odometry edges $\dEdges_{\mathcal{O}}$ and a set of untrusted loop
closure edges $\dEdges_{\mathcal{L}}$. It is common to address this problem by
introducing binary variables $d_{ij} \in \{0,1\}$ for each of the untrusted
edges (cf.
\cite{olson2013inference,sunderhauf2012switchable,agarwal2013robust, Segal2014hybrid}), where
$d_{ij} = 1$ indicates that the measurement $\npose_{ij}$ is drawn from the
outlier process. Since the outlier distribution is unknown, it is common to
assume that the outlier generating process is Gaussian with covariance
$\noisy{\Sigma} \succ \Sigma$ much larger than the inlier model covariance. In
turn, the problem of interest can be posed as follows:
\begin{equation}\label{eq:robust-pgo}
    \min_{\pose_i \in \SE(3)} \sum_{\edge \in \dEdges_{\mathcal{O}}} \|r_{ij}(\pose_i, \pose_j)\|_{\Sigma}^2 + \sum_{\edge \in \dEdges_{\mathcal{L}}} e_{ij}(\pose_i, \pose_j, d_{ij}),
\end{equation}
where
\begin{equation}
  e_{ij}(\pose_i, \pose_j, d_{ij}) \triangleq \begin{cases} -\log \omega_0 + \|r_{ij}(\pose_i, \pose_j)\|^2_{\Sigma},\ d_{ij} = 0, \\
    - \log \omega_1 + \|r_{ij}(\pose_i, \pose_j)\|^2_{\noisy{\Sigma}},\ d_{ij} = 1,
  \end{cases}
\end{equation}
and $\omega_0,\ \omega_1 \in [0,1]$ are prior weights on the inlier and outlier
hypotheses, respectively. Letting $|\dEdges_{\mathcal{L}}| = m$, there are
$\mathcal{O}(2^m)$ possible assignments to the discrete variables in this
problem. However, the above formulation can easily be represented in terms of
discrete factors for the weights $\omega_0,\ \omega_1$ and discrete-continuous
factors of the form in \eqref{eq:dcfactor} to switch between the Gaussian inlier
and outlier hypotheses. Moreover, once again, the discrete variables decouple
from one another conveniently when we condition on an assignment to the
continuous variables \rev{(Fig. \ref{fig:graph-examples}b shows the
  corresponding graph)}.

In our experimental setup, we corrupt pose graphs with outliers generated
between a random pair of (non-adjacent) poses with relative translation sampled
uniformly from a cube of side-length 10 meters and rotation sampled from the
uniform distribution over rotations (a similar process to the one described in
\cite[Section VI.C]{tzoumas2019outlier}). Based on the prior work of
\citet{olson2013inference}, we made the outlier covariance model isotropic with
variance $10^7$ times larger than the inlier variance and set the weights
$\omega_0,\ \omega_1$ to be the corresponding Gaussian normalizing constants. We
provide two points of comparison: a Levenberg-Marquardt (LM) solver applied to
the graph corrupted by outliers (as a ``worst case'') and the state-of-the-art
graduated nonconvexity (GNC) solver \cite{yang2019graduated}.\footnote{We use
  the GNC approach implemented in \GTSAM{} with the truncated least-squares
  cost. We use the default parameters.} Our results are summarized in Figure
\ref{fig:rpgo}. In particular, we observe that in the cases that we are able to
supply a high-quality initialization, optimization using our approach enables
recovery of accurate SLAM solutions \emph{significantly faster} than the GNC
approach (and in some cases, faster than the non-robust baseline).\footnote{The
  computation speed of our approach is primarily derived from two factors:
  first, we exploit efficient incremental optimization via iSAM2, and second,
  our optimization procedure is purely local, as opposed to GNC which requires
  solving re-weighted variants of the original pose graph optimization problem
  several times in an effort to improve robustness to initialization.} \rev{Our
  approach is susceptible to local optima (leading to suboptimal performance on
  the CSAIL dataset). We will revisit this issue in Section \ref{sec:local-opt}}.

\subsection{Tightly-coupled Semantic SLAM}\label{sec:semantic-slam}

\begin{figure}
  \centering
  \includegraphics[width=0.85\columnwidth]{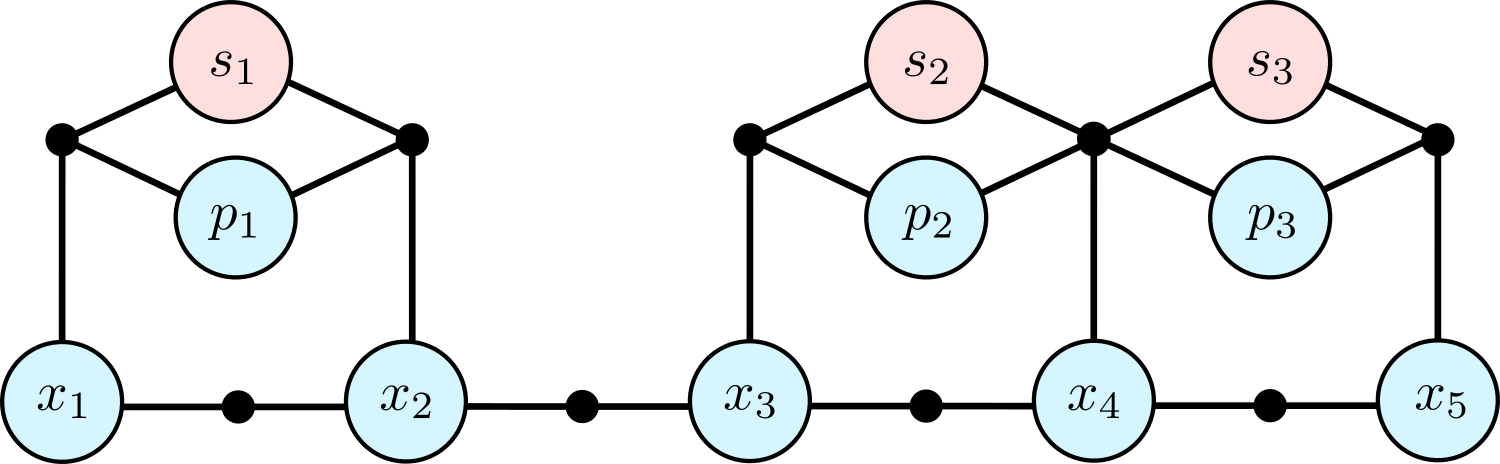}
  \caption{\rev{\textbf{Factor graph representing semantic SLAM.}} Here robot poses
    $x_i$ are connected by odometry measurements, and joint geometric-semantic
    measurements are made between poses and landmarks $(p_j, s_j)$. Measurements
    with multiple possible associations, represented as mixture factors in
    \eqref{eq:mixture-factor}, are connected to multiple landmarks. To avoid
    clutter, the measurements depicted have only two hypotheses at most, but we
    allow for larger hypothesis sets. \label{fig:semantic-slam-graph}}
\end{figure}

\begin{table}[t]
  \centering
  \noindent
  \begin{tabularx}{\columnwidth}{ c | c || c c | c c }
    \hline
    & & \multicolumn{2}{c|}{Translation Error (m)} & \multicolumn{2}{c}{Rotation Error (deg)} \\
    \hline Seq & Method & Mean  & RMSE  & Mean  & RMSE  \\
    \hline
    \multirow{2}{*}{00}
    & VISO2 \cite{geiger2011stereoscan} & 11.457 & 13.136 & 2.410 & 2.562 \\
    & Ours & 2.883 & 3.260 & 3.682 & 3.805 \\
    \hline
    \multirow{2}{*}{05}
    & VISO2 \cite{geiger2011stereoscan} & 6.227 & 7.2772 & 2.489 & 2.735 \\
    & Ours & 2.175 & 2.442 & 1.486 & 1.665 \\
    \hline
    \multirow{2}{*}{08}
    & VISO2 \cite{geiger2011stereoscan} & 8.586 & 9.797 & 3.003 & 3.283 \\
    & Ours & 8.468 & 9.429 & 6.264 & 6.732 \\
    \hline
  \end{tabularx}
  \caption{\textbf{KITTI datasets.} Absolute translation and rotation errors (in
    meters and degrees, respectively) for our approach and VISO2.}
  \label{table:kitti-stats}
  % \vspace{-5mm}
\end{table}

Several recent works have considered the problem of jointly inferring a robot's
trajectory and a set of landmark positions and classes with unknown measurement
correspondences, i.e. semantic SLAM \cite{doherty2020probabilistic,
  bowman2017probabilistic}. In particular, here we apply our approach to
optimize jointly for robot poses $x_i \in \SE(3), i=1,\ldots, n$, and landmark
locations and classes $\ell_j = (p_j, s_j),\ p_j \in \R^3,\ s_j \in
\mathcal{C},\ j = 1, \ldots, m$, where $\mathcal{C}$ is an \emph{a priori} known
set of discrete semantic classes.

We adopt the general methodology of \citet{doherty2020probabilistic} for data
association: given a range-bearing measurement and associated semantic class
obtained from an object detector, we apply a threshold on the measurement
likelihood to determine whether the measurement corresponds to a \emph{new} or
\emph{old} landmark (for this, we employ the approximate marginal computations
in \eqref{eq:discrete-marg} and \eqref{eq:laplace-marg}). If it is a new
landmark, we simply add the new measurement and landmark to our graph. If it is
an old landmark, we add it to the graph as a mixture with a single component for
each landmark that passes the likelihood threshold:
\begin{equation}\label{eq:mixture-factor}
  f_k(\pose_i, \mathcal{H}_k) \triangleq \max_{\ell_j \in \mathcal{H}_k} f_k(\pose_i, \ell_j),
\end{equation}
where $\mathcal{H}_k \subseteq L$ is a subset of landmarks and the landmark
measurement factor $f_k(\pose_i, \ell_j)$ decomposes as:
\begin{equation}\label{eq:semantic-measurement-factors}
  f_k(\pose_i, \ell_j) \triangleq \phi_k(s_j)\psi_k(\pose_i, p_j).
\end{equation}
Here $\phi_k(s_j)$ is a categorical distribution and $\psi_k(\pose_i, p_j)$ is
Gaussian with respect to the range and bearing between $\pose_i$ and $p_j$. We
also incorporate odometry factors of the form used in
\eqref{eq:pgo}.\footnote{For a more detailed exposition of this approach, see
  \cite{doherty2020probabilistic}.} The overall graphical model specifying this
problem is depicted in Figure \ref{fig:semantic-slam-graph}. Since the
measurement factors in equation \eqref{eq:semantic-measurement-factors} involve
both continuous and discrete variables, it is nontrivial to
implement this within any existing framework. In the hybrid factor graph
representation, however, problems of this form admit a concise description and
solution using discrete-continuous factors.

In this demonstration, we consider semantic SLAM using stereo camera data from
the KITTI dataset \cite{geiger2013vision}. We sample keyframes every two
seconds, using VISO2 \cite{geiger2011stereoscan} to obtain stereo odometry
measurements and YOLO \cite{redmon2016you} for noisy detections of two types of
objects: \emph{cars} and \emph{trucks}. We estimate the range and bearing to an
object's position as that of the median depth point projecting into a detected
object's bounding box. Using \DCSAM{}, we are able to compute solutions to this
problem online.\footnote{We run our solver on an Intel i7 2.6 GHz CPU and YOLO
  on an NVIDIA Quadro RTX 3000 GPU.} Table \ref{table:kitti-stats} gives a
quantitative comparison of our approach with the odometric estimate from VISO2.
Our approach substantially improves upon the translational errors of the
odometric estimate and additionally enables the estimation of discrete landmark
classes.\footnote{Qualitative results visualizing the semantic map output from
  our method are available in the supplement
  \cite{doherty2022discrete-techreport}
  .}

\begin{figure}
  \centering
  \includegraphics[width=0.9\columnwidth]{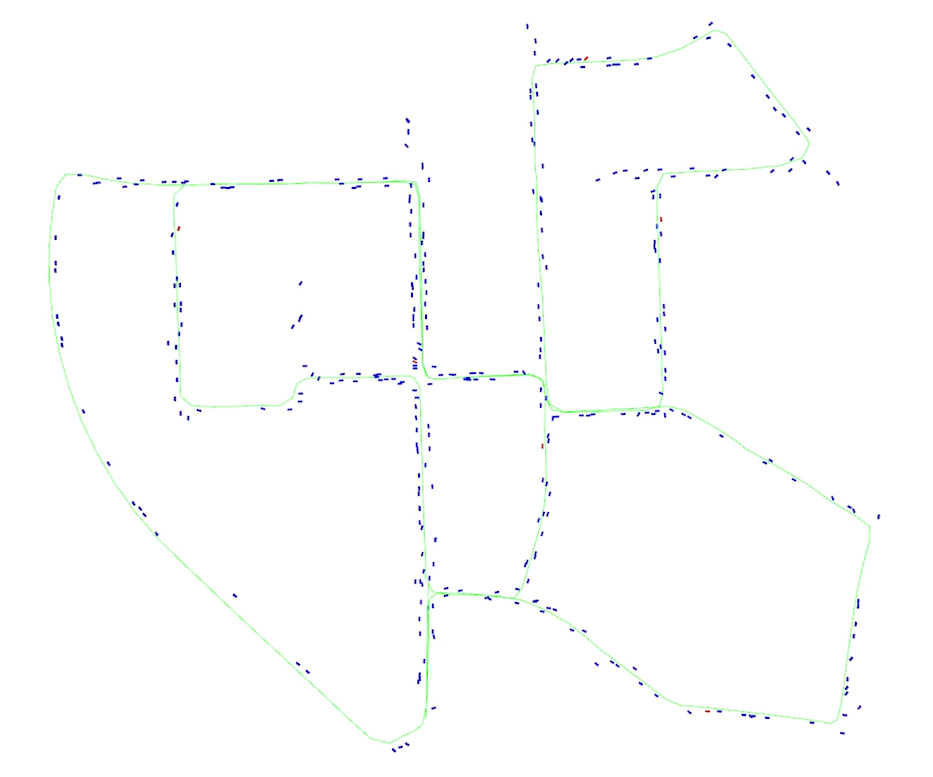}
  \caption{\textbf{KITTI Sequence 00.} Estimated trajectory (green) on the KITTI
    dataset sequence 00 using our semantic SLAM implementation. Cars are
    depicted in blue and trucks are depicted in red. \label{fig:semantic-slam-results}}
\end{figure}

\section{Discussion}\label{sec:discussion}

\subsection{When is alternating minimization efficient?}
\label{sec:efficient}

The conditional factorization in equation \eqref{eq:discrete-conditional} serves
to give some intuition for when our optimization approach is computationally
efficient. If the distribution over discrete variables conditioned on the
continuous assignment admits a factorization into small subsets
$\discreteVars_j$, then the optimization problem in \eqref{eq:discrete-opt}
decouples into separate problems in direct correspondence with each set
$\discreteVars_j$. Since we perform exact inference on this distribution,
solving for the most probable assignment is in the worst case exponential in the
size of $\discreteVars_j$ \cite{Koller09book}. Consequently, in graphs with
densely connected discrete variables that are not decoupled by continuous
variables, the per iteration complexity of alternating minimization can increase
dramatically. That said, Proposition \ref{prop:monotonic-cost-reduction} ensures
monotonic improvement in the objective so long as each optimization subproblem
admits a solution no worse than the current iterate. Therefore, it is reasonable
to consider extending this approach by allowing for \emph{local} optimization in
the discrete subproblem \cite{savchynskyy2019discrete}.

\subsection{When can we ensure accurate solutions?}
\label{sec:local-opt}

Though we are able to make some claims about when solutions to the discrete and
continuous subproblems in our alternating minimization approach can be tractably
computed, the question remains as to when one can ensure that these local search
methods recover \emph{high-quality} solutions. Since the alternating
minimization approach is a \emph{descent} method, we rely on the ability to
provide a ``good'' initial guess from which purely going ``downhill'' in the
cost landscape is enough to obtain a high-quality estimate. However, this is
already a requirement of off-the-shelf tools for solving many robot perception
problems, such as pose-graph SLAM, which (by virtue of the nonconvexity of the
optimization problems they attempt to solve) require high-quality initialization
\cite{rosen2021advances}.\footnote{\rev{Moreover, even in these ``simpler''
    problem instances, \emph{verification} that a globally optimal solution has
    been found has only been demonstrated for certain special cases (see
    \cite[Sec 2]{rosen2021advances} for a review) and is otherwise itself
    an open problem.}} Nonetheless, the consideration of discrete
variables \emph{can} make initialization more challenging. The specifics of
providing an initial guess will ultimately depend heavily on the application.

One can also attempt to reduce the initialization sensitivity of solutions
obtained by our local optimization approach. A number of methods along these
lines have been proposed. For example, graduated nonconvexity (GNC)
\cite{yang2019graduated} as discussed in Section \ref{sec:rpgo}, optimizes
nonconvex functions by successively producing (and optimizing) a more
well-behaved (typically convex) surrogate. Sampling methods and simulated
annealing methods can improve convergence by allowing for the exploration of
states that may \emph{increase} cost or by initializing a descent method like
our proposed approach from several starting points \cite{mackay1998introduction,
  szu1987fast}. Similarly, stochastic gradient descent is a classical approach
for nonconvex optimization (and has appeared in the setting of robust pose-graph
SLAM \cite{Olson06icra}), which could reasonably be adapted to our approach.
Finally, heuristics have been considered which use consistency of measurements
to filter out unlikely hypotheses \cite{mangelson2018pairwise} or to
\emph{re-initialize} estimates for factor graphs \cite{lu2021consensus}.

\section{Conclusion} 
\label{sec:conclusion}

In this work we presented an approach to optimization in discrete-continuous
graphical models based on \emph{alternating minimization}. Our key insight is
that the structure of the alternating optimization procedure allows us to
leverage the conditional independence relations exposed by factor graphs to
efficiently perform local search. We showed how the complexity of inference in
this setting is related to structure of the graphical model itself. Critically,
we observed that many important problems in robotics can be framed in terms of
graphical models admitting particularly advantageous structures for application
of our approach. We provided a method for addressing the issue of recovering
uncertainties associated with estimates in the discrete-continuous setting. Our
solver and associated tools are implemented as part of our library, \DCSAM{},
which is, to the best of our knowledge, the first openly available library for
addressing these \emph{hybrid} discrete-continuous optimization problems.
Finally, we demonstrate the application of our method to the key problems of
robust pose graph optimization, and semantic SLAM.

\bibliographystyle{IEEEtranN}
\begin{footnotesize}
\bibliography{references}
\end{footnotesize}

\end{document}